\def\gt{\mathit{gt}}
\def\Pr{\mathbf{P}}
\def\legi{\alpha \in \gt}
\def\leg{\legi}
\def\lefi{\alpha \in f_i}
\def\lef{\lefi}
\def\leng{\alpha \notin \gt}
\def\d{D=d}
\def\c{c \in \m}
\def\notc{c \notin \m}
\def\m{M}
\def\lefj{\beta \in f_i}
\def\ci{c_\alpha \in \m}
\def\legj{\beta \in \gt}
\def\lefi{\alpha \in f_i}
\def\legi{\alpha \in \gt}
\newtheorem{definition}{Definition}[section]
\newtheorem{theorem}{Theorem}[section]
\newtheorem{corollary}{Corollary}[section]
\newtheorem{example}[theorem]{Example}
\def\toyota{\mathit{toyota}}
\def\ford{\mathit{ford}}
\def\dodge{\mathit{dodge}}
\def\us{\mathit{us}}
\def\japan{\mathit{japan}}
\def\fcar{f_\mathit{car}}
\def\car{\mathit{car}}
\title{Probabilistic Foundations for Metacognition via Hybrid-AI}
\author {
    Paulo Shakarian\textsuperscript{\rm 1},
    Gerardo I. Simari\textsuperscript{\rm 2},
    Nathaniel D. Bastian\textsuperscript{\rm 3}
}
\begin{document}

\maketitle

\begin{abstract}
Metacognition is the concept of reasoning about an agent's own internal processes, and it has recently received renewed attention with respect to artificial intelligence (AI) and, more specifically, machine learning systems.  This paper reviews a hybrid-AI approach known as ``error detecting and correcting rules'' (EDCR) that allows for the learning of rules to correct perceptual (e.g., neural) models.  Additionally, we introduce a probabilistic framework that adds rigor to prior empirical studies, and we use this framework to prove results on necessary and sufficient conditions for metacognitive improvement, as well as limits to the approach.  A set of future research directions is also provided.
\end{abstract}

%

\section{Introduction}

Originally a concept from developmental psychology~\cite{flavell1979metacognition}, metacognition refers to reasoning about an agent’s own internal processes~\cite{wei24}.  This idea of metacognition, considered the human brain's ``self-monitoring process''~\cite{demetriou1993architecture}, has been studied in a diverse set of fields~\cite{li2017applications,izzo2019survey,caesar2020nuscenes}.  Over the years, its study has been proposed in the field of AI~\cite{cox2011metareasoning,cox2005metacognition}, and recent interest~\cite{wei24} has reinvigorated this discussion.
In this paper, we review a recent, hybrid-AI style of metacognition known as ``error detection and correction rules'' (EDCR) that has been explored in a variety of models and use cases (see Table~\ref{tab:summary}). 
In short, the proposal is a hybrid-AI approach in which logical rules are learned to characterize the model performance of well-trained perceptual (e.g., neural) models.  This paper serves as a review of the results of these recent articles, and improves the underlying theoretical underpinnings by framing them in terms of a probabilistic argument, providing new insights into the use of hybrid-AI for metacognition.  
In particular, our novel theoretical framework allows us to explain some empirically observed results, as well as leads us to several new open research questions.

\section{Error Detection and Correction Rules}

In this framework, we are given a well-trained model, normally denoted as $f$, that takes some continuous input (e.g., a vector) and returns a set of class labels.  We will use subscripts (e.g., $f_i$ or $f_1$) to denote multiple models when relevant.  We note that the model need not be neural (though all existing studies have focused on neural models), and in general we do not assume access to the model weights.  We now introduce a small running example.

\begin{example}
\label{ex:1}
Let $f_{car}$ be a neural model trained on data set $D_{tng}$ such that given a sample (that we denote $x$) returns a subset of labels $\{\ford,\toyota,\dodge,\us,\japan\}$.
So, for example, for sample $\omega$, $f_{car}(\omega)=\{\dodge,\us\}$.
\end{example}

\begin{table}
\begin{center}
\begin{small}
    \begin{tabular}{|l|l|l|}
    \hline
        \textbf{Paper and} & \textbf{Base model}   & \textbf{Source of} \\
                                       \textbf{use case}&                 &\textbf{conditions} \\
        \hline
        \hline
        \citet{kri24} & ViT & Hier. class  \\
        Vision              &     & labels, other \\
        && models\\
        \hline
        \citet{xi24} &  CNN, LRCN, & Dom. kn, \\
         Trajectory classification & LRCNa & other models\\
        
        \hline
        \citet{lee24} &  CNN, RNN, & Other models\\
        Time series              classification & Attention     & \\
        
        \hline
    \end{tabular}
\end{small}
\caption{Summary of existing literature on EDCR.}  
\label{tab:summary}
\end{center}
\end{table}

The work on EDCR introduces the notion of a \textit{metacognitive condition}, which is some aspect of the environment, sensor, model, or metadata that may cause the model to be incorrect in a given circumstance.  In the existing work, the metacognitive condition (which we refer to as ``condition'', for short) has come from domain knowledge, model output (e.g,, labels at different levels of a hierarchy), or other models (e.g., different architectures, trained on different data) -- this is summarized in Table~\ref{tab:summary}.  
This leads us to the first type of rule, the error detecting rule, which is expressed in a first order logic syntax as follows:
\begin{eqnarray}
\mathit{error}^i_\alpha(X) \leftarrow \mathit{pred}_\alpha^i(X) \wedge \bigvee_{j\in \mathit{DC}_i}\mathit{cond}_j(X)
\end{eqnarray}
In words, this rule states that if any of the conditions in set $\textit{DC}_i$ are met, and model $i$ predicts class $\alpha$, then the model has an erroneous detection.  We note that we can replace the disjunction with a singleton and have multiple rules for a given model-class pair (i.e., using a logic program).  The second type of rule is a correction rule, which has the following syntax:
\begin{eqnarray}
\mathit{corr}_\beta^i(X) \leftarrow \bigvee_{j,\alpha\in \mathit{CC}_\beta}\left(\mathit{cond}_j(X) \wedge \mathit{pred}_\alpha^i(X) \right)
\end{eqnarray}
In words, this rule states that given a set of condition-class pairs relevant to class $\beta$ (denoted $\mathit{CC}_\beta$), for any of those pairs (e.g., model $i$ predicting $\alpha$ for the sample while condition $j$ is true), the rule dictates that the sample should be relabeled with label $\beta$.  
In both~\citet{xi24} and~\citet{lee24}, the detection step happens first (in both training and inference), allowing for samples to have some labels ``erased'' and only samples with erased labels will have new labels assigned by a correction rule.

\begin{example}
\label{ex:2}
Consider the model $f_{\mathit{car}}$ from Example~\ref{ex:1}.  Suppose we have a detection rule of the form:
\begin{eqnarray*}
\mathit{error}^\car_{\toyota}(X) \leftarrow \mathit{pred}_\toyota^\car(X) \wedge \mathit{cond}_{\us}(X)
\end{eqnarray*}
Here we have a single condition $\mathit{cond}_\us$, and we define it to be true for sample $x$ when $\us \in f_\car(x)$.
This is an example of using a class label from a different level of the hierarchy, as done in~\cite{kri24}.
Likewise, we can imagine a detection rule:
\begin{eqnarray*}
\mathit{corr}^\car_{\dodge}(X) \leftarrow \mathit{cond}_{\us}(X) \wedge \mathit{pred}_\toyota^\car(X)
\end{eqnarray*}
Here, if the condition-class pair $\mathit{cond}_\us$ and $\toyota$ are true, then the sample should be re-labeled as $\dodge$.
\end{example}

In Example~\ref{ex:2}, one could imagine having a priori knowledge that predicting $\japan$ and $\us$ together would yield an inconsistency.  
However, in such a case, it may make more sense to define the condition as the case where $\{\japan,\us\}\subseteq \fcar(x)$.  In fact, in prior work, the primary knowledge engineering task is to design the conditions, not the rules, which are learned from data, primarily using combinatorial techniques.
For example, in~\citet{xi24} detection rules are learned via submodular maximization of precision while constraining recall reduction,
while in~\citet{kri24} detection rules are learned via submodular ratio maximization to optimize toward F1.  
We also note that the detection rule of Example~\ref{ex:2} can also function as a constraint on class relationships, in this case meaning that $\us$ and $\toyota$ cannot go together.  This turns out to be useful, as the rules are already expressed in logic.  This enabled~\citet{kri24} to take learned detection rules and retrain a model with an LTN~\cite{ltn22} loss function to gain improved model performance.

\section{Probabilistic Interpretation and Results}

We now present a new probabilistic interpretation of EDCR.  First, we introduce a bit of notation: when the object in question is understood, we shall use the notation $f_i$ to refer to the set of propositions produced by model $f_i$ on the object.  So, for example, $\lefi$ means that model $f_i$ predicted label $\alpha$.  We shall also use the notation $\gt$ to denote a set of ground truth labels for the object; so, the statement $\lefi,\leng$ means that model $f_i$ incorrectly predicted class $\alpha$ for the object.  With this in mind, we can represent the precision and recall for a model as the following conditional probabilities.
\begin{eqnarray}
\textit{Precision:} & P_\alpha = \Pr(\legi \; | \; \lefi)\\
\textit{Recall:}    & R_\alpha = \Pr(\lefi \; | \; \legi)
\end{eqnarray}
We will also consider a random variable $D$ that specifies a specific distribution.  We can consider conditional probabilities with an unspecified distribution (as shown above) to be approximated from training data, while setting $D$ to be something specific would signify being out-of-distribution: e.g., the precision for model $i$ on class $\alpha$ under distribution~$d$ is written as $\Pr(\legi \; | \; \lefi,\d)$.
In what follows, we shall use $\m$ to denote the set of metacognitive conditions.  We note that this framework allows us to precisely define what it means for a condition to be error detecting, and introduce a new, formal definition to that effect.

\begin{definition}[Error Detecting]
\label{def:det}
A metacognitive condition~$c$ is \emph{error detecting} with respect to model $i$, class $\alpha$, and distribution $d$ if 
$\Pr(\legi \; | \; \lefi,\c,\d)$ is less than or equal to $\Pr(\legi \; | \; \lefi,\d)$.
\end{definition}

Intuitively, a condition is error detecting if the precision of the model drops when the condition is true.  The second property is distribution invariance.

\begin{definition}[Distribution Invariance]
A metacognitive condition $c$ is \emph{distribution invariant} with respect to model~$i$, class~$\alpha$, and set of distributions $\mathcal{D}$ if for any distributions $d \in \mathcal{D}$ it is error detecting with respect to that distribution.
\end{definition}

At first glance, distribution invariance seems to be a strong definition, and some may even think the set of conditions meeting that criteria would be small.  However, consider conditions based on constraints among classes; e.g., inconsistent assignments in a multi-class classification problem is one such case, and the configuration of the sensor is another.

With conditions in mind, we define precision and recall after applying a metacognitive condition:
\begin{eqnarray}
\textit{Precision:} & P_\alpha^c = \Pr(\legi \; | \lefi,\notc)\\
\textit{Recall:}    & R_\alpha^c = \Pr(\lefi,\notc \; | \; \legi)
\end{eqnarray}

Our first result shows how much the precision changes after applying a metacognitive condition.  The argument in this paper characterizes the change in precision using a probabilistic argument and, notably, the result is obtained without any assumptions of independence.

\begin{theorem}[Metacognitive Precision Change]
\label{thm:1}
The following identity holds:
\[
P_\alpha^{c}-P_\alpha=K\times\big (\Pr(\alpha \notin \gt \; | \; \lefi, \c)-(1-P_\alpha)\big),
\]
where $K=\frac{\Pr(\c \; | \; \lefi)}{\Pr(\notc \; | \; \lefi)}$
\end{theorem}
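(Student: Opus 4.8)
The plan is to rewrite both precision quantities in terms of the complementary \emph{error} event $\leng$ and then invoke the law of total probability. Since $P_\alpha = \Pr(\legi \mid \lefi)$, its complement is the error rate $1 - P_\alpha = \Pr(\leng \mid \lefi)$; similarly, $1 - P_\alpha^{c} = \Pr(\leng \mid \lefi, \notc)$. Working with $\leng$ rather than $\legi$ is the one genuine modeling choice in the argument, because it is precisely the quantity $\Pr(\leng \mid \lefi, \c)$ appearing on the right-hand side of the claimed identity that must be connected to the two precisions.

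First I would condition on the predicted-label event $\lefi$ and partition the probability space according to whether the metacognitive condition holds. The law of total probability then yields
\[
\Pr(\leng \mid \lefi) = \Pr(\leng \mid \lefi, \c)\,\Pr(\c \mid \lefi) + \Pr(\leng \mid \lefi, \notc)\,\Pr(\notc \mid \lefi).
\]
No independence assumption is needed here: this identity holds for \emph{any} joint distribution over ground truth, predictions, and conditions, which is exactly the feature the theorem advertises.

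Next I would substitute $1 - P_\alpha$ for the left-hand side and $1 - P_\alpha^{c}$ for the factor $\Pr(\leng \mid \lefi, \notc)$, and use $\Pr(\c \mid \lefi) + \Pr(\notc \mid \lefi) = 1$ to eliminate one of the two mixing weights. Solving the resulting linear equation for $P_\alpha^{c} - P_\alpha$ and dividing through by $\Pr(\notc \mid \lefi)$ produces the factor $K = \Pr(\c \mid \lefi)/\Pr(\notc \mid \lefi)$ in front, and after the $P_\alpha$ contributions combine, the remaining bracket collapses to $\Pr(\leng \mid \lefi, \c) - (1 - P_\alpha)$, matching the stated form.

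The computation is entirely elementary, so I do not expect a serious obstacle; the only places to be careful are the bookkeeping of complements — ensuring every precision is converted to its error counterpart \emph{before} the total-probability decomposition is applied — and the well-definedness of $K$ and of $P_\alpha^{c}$, both of which implicitly require $\Pr(\notc \mid \lefi) > 0$, i.e., that the condition fails with positive probability given the prediction.
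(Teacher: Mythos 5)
Your proof is correct and takes essentially the same route as the paper's: both arguments reduce to the total-probability decomposition over $\c$ versus $\notc$ conditioned on $\lefi$, followed by linear algebra, the only (cosmetic) difference being that the paper decomposes the success event $\legi$ and converts to $\leng$ at the final step, whereas you decompose the error event $\leng$ from the outset. Your explicit remark that $\Pr(\notc \mid \lefi) > 0$ is needed for $K$ and $P_\alpha^c$ to be well defined is a small but worthwhile point the paper leaves implicit.
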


In~\citet{xi24}, an analogous result is shown. Both results suggest finding conditions that attempt to maximize the product of probabilities 
$\Pr(\c \; | \; \lefi)$ and $\Pr(\alpha \notin \gt \; | \; \lefi, \c)$ is desirable for precision improvement by ``erasing'' labels 
-- these correspond to support and confidence in that paper.  
It also turns out that this product has computationally desirable properties, as it is submodular (proven in~\citet{xi24}).  However, we point out that the result of~\citet{xi24} did not frame the preliminaries in terms of probability, and hence it was not clear if there were latent assumptions; this probabilistic interpretation and the corresponding proof (in the appendix at \url{https://arxiv.org/abs/2502.05398}) strengthen the results, as now they clearly do not rely on independence.  Further, we obtain additional insights built on this result discussed throughout this paper, and one notable insight is that we immediately obtain a necessary and  sufficient condition for obtaining improvement in precision:
\begin{eqnarray}
\Pr(\alpha \notin \gt \; | \; \lefi, \c) > 1 - P_\alpha
\end{eqnarray}
We can think of $1-P_\alpha$ as the residual of the model -- how much room it has to improve precision.  We can also think of this result as having a practical application in determining if a condition is invariant.  
For example, we would likely assume that $1-P_\alpha$ increases for out-of-distribution samples, and perhaps we could determine some $\epsilon$ where for some $\d$ that $\Pr(\alpha \notin \gt \; | \; \lefi, \c,\d)$ is within $\epsilon$ of confidence $\Pr(\alpha \notin \gt \; | \; \lefi, \c)$.  So, for example, if we identify the confidence value by examining multiple samples, and obtain an error, we can determine up to which value of $1-P_\alpha$ the condition is expected to be invariant.  Another aspect of this result is that we can also prove that our error-detecting definition (Definition~\ref{def:det}) is an equivalent condition.

\begin{theorem}[Error Detection is Necessary and Sufficient]
\label{thm:edns}
Condition $c$ is error detecting iff $P_\alpha^c  \geq P_\alpha$.
\end{theorem}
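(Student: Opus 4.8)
The plan is to obtain the equivalence directly from Theorem~\ref{thm:1}, which already isolates the sign of the precision gap. First I would note that the multiplier $K=\frac{\Pr(\c \; | \; \lefi)}{\Pr(\notc \; | \; \lefi)}$ is strictly positive and finite whenever the condition is neither impossible nor certain among the samples predicted to be $\alpha$, i.e.\ $0<\Pr(\c \; | \; \lefi)<1$. Under that (mild) nondegeneracy, the sign of $P_\alpha^{c}-P_\alpha$ is controlled entirely by the bracketed factor in Theorem~\ref{thm:1}, so that $P_\alpha^{c}\geq P_\alpha$ holds exactly when $\Pr(\leng \; | \; \lefi,\c)\geq 1-P_\alpha$.

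Next I would rewrite this last inequality as the error-detecting inequality by passing to complements. Since $\Pr(\leng \; | \; \lefi,\c)=1-\Pr(\legi \; | \; \lefi,\c)$ and $1-P_\alpha=1-\Pr(\legi \; | \; \lefi)$, cancelling the two units and reversing the inequality turns $\Pr(\leng \; | \; \lefi,\c)\geq 1-P_\alpha$ into
\[
\Pr(\legi \; | \; \lefi,\c)\leq \Pr(\legi \; | \; \lefi),
\]
which is precisely Definition~\ref{def:det} read in the unspecified-distribution setting (dropping the conditioning on $\d$): the statement that $c$ is error detecting. Chaining the two equivalences proves the theorem in both directions.

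The only real obstacle is the boundary behaviour hidden in $K$, so I would acknowledge the degenerate cases where one of the conditionals is undefined rather than routing them through Theorem~\ref{thm:1}. A clean way to see the whole statement is the law of total probability: conditioning on $\lefi$ throughout,
\[
P_\alpha=\Pr(\legi \; | \; \lefi,\c)\,\Pr(\c \; | \; \lefi)+P_\alpha^{c}\,\Pr(\notc \; | \; \lefi),
\]
so $P_\alpha$ is a convex combination of $\Pr(\legi \; | \; \lefi,\c)$ and $P_\alpha^{c}$. A weighted mean with strictly positive weights lies between its two arguments, and it is at least one of them exactly when it is at most the other; applying this observation gives $\Pr(\legi \; | \; \lefi,\c)\leq P_\alpha$ if and only if $P_\alpha\leq P_\alpha^{c}$, which is the desired equivalence and makes transparent that the claim holds with equality precisely when the condition carries no precision signal.
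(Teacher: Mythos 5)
Your proposal is correct, and it takes a genuinely different route from the paper. The paper proves each direction separately by contradiction, expanding $P_\alpha^c$ and $P_\alpha$ into ratios of joint probabilities, cross-multiplying, and cancelling the common term $\Pr(\legi,\lefi)\Pr(\lefi)$ to land back at the error-detecting inequality. Your first route instead reads the equivalence off the sign of the identity in Theorem~\ref{thm:1} (the paper gestures at this for the strict-inequality version but does not use it to prove Theorem~\ref{thm:edns}); your second route is cleaner still: the total-probability decomposition
\[
\Pr(\legi \mid \lefi)=\Pr(\legi \mid \lefi,\c)\,\Pr(\c \mid \lefi)+P_\alpha^{c}\,\Pr(\notc \mid \lefi)
\]
exhibits $P_\alpha$ as a convex combination of $\Pr(\legi \mid \lefi,\c)$ and $P_\alpha^{c}$, and the mean-lies-between-its-arguments observation gives both directions of the biconditional at once, with the equality case visible. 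What your approach buys is brevity, a single argument covering both implications, and an explicit accounting of the degenerate weights $\Pr(\c \mid \lefi)\in\{0,1\}$, which the paper's algebra silently divides by; what the paper's approach buys is that it is entirely self-contained and mirrors the style of its other proofs. Either version is acceptable; if you route through Theorem~\ref{thm:1}, just be careful that the theorem as stated in the body uses a strict inequality for improvement while Definition~\ref{def:det} and Theorem~\ref{thm:edns} are non-strict, so you need $K>0$ (not merely $K\geq 0$) for the non-strict signs to transfer, which is exactly the nondegeneracy you flag.
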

\begin{proof}
\noindent$(\rightarrow)$ By way of contradiction, assume $c$ is error detecting and $P_\alpha^c  < P_\alpha$, which gives us:
\begin{small}
\begin{eqnarray*}
&\Pr(\lefi)\Pr(\leg,\lefi,\notc) <\\
&\,\,\,\,\,\,\,\,\,\,\,\Pr(\leg,\lefi)\Pr(\lefi,\notc)
\end{eqnarray*}
\begin{eqnarray*}
&\Pr(\lefi)(\Pr(\leg,\lefi)-\\
&\,\,\,\,\,\,\,\,\,\,\,\Pr(\leg,\lefi,\c)) <\\
&\,\,\,\,\,\,\,\,\,\,\,\Pr(\leg,\lefi)(\Pr(\lefi)-\Pr(\lefi,\c))
\end{eqnarray*}
\begin{eqnarray*}
&\Pr(\lefi)(-\Pr(\leg,\lefi,\c)) <\\
&\,\,\,\,\,\,\,\,\,\,\,\Pr(\leg,\lefi)(-\Pr(\lefi,\c))
\end{eqnarray*}
\begin{eqnarray*}
&\Pr(\leg \; | \; \lefi) < \Pr(\leg|\lefi,\c)
\end{eqnarray*}
\end{small}

However, this contradicts the definition of error detecting.  The remaining part of the proof is in the appendix.
\end{proof}

\noindent\textbf{Characterization of recall.} 
Next, we look at the change in recall. The recall prior to disregarding the classification for a prediction is
$\Pr(\lef \; | \; \leg)$.  Once we apply the condition, the recall for class $i$ becomes $\Pr(\lef,\notc \; | \; \leg)$.
Note that in doing so, we are essentially turning off model predictions, which can only reduce recall.  We can show an equivalent value to the reduction in recall with the following result.  Again, there is an analogous result in~\citet{xi24}; however our proof (in the appendix) presents a probabilistic argument and the proof clearly illustrates that no independence assumptions are made.

\begin{theorem}[Recall Reduction]
\label{thm:recall}
$\Pr(\lef \; | \; \leg) - \Pr(\lef, \notc \; | \; \leg) =$
\begin{scriptsize}
\[
\Pr(\leg \; | \; \lef,\c)\Pr(\c \; | \; \lef)\frac{\Pr(\lef \; | \; \leg)}{\Pr(\leg \; | \; \lef)}.
\]
\end{scriptsize}
\end{theorem}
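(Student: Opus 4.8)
The plan is to show that both sides reduce to the single joint conditional probability $\Pr(\lef, \c \mid \leg)$, so that the identity holds as a matter of definitional bookkeeping and no independence assumption is ever invoked.

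First I would simplify the left-hand side. Conditioning throughout on $\leg$ and partitioning the event $\lef$ according to whether the condition holds gives $\Pr(\lef \mid \leg) = \Pr(\lef, \c \mid \leg) + \Pr(\lef, \notc \mid \leg)$, since given $\leg$ the event $\lef$ splits disjointly into $\lef \cap \c$ and $\lef \cap \notc$. Subtracting $\Pr(\lef, \notc \mid \leg)$ from both sides, the left-hand side of the claimed identity collapses immediately to $\Pr(\lef, \c \mid \leg)$. This already captures the intuition stated in the text: the recall lost by ``turning off'' predictions is exactly the probability mass of correct predictions on which the condition fires.

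Next I would expand the right-hand side purely by the definition of conditional probability. Writing each factor as a ratio of joint probabilities, $\Pr(\leg \mid \lef, \c) = \Pr(\leg, \lef, \c)/\Pr(\lef, \c)$ and $\Pr(\c \mid \lef) = \Pr(\c, \lef)/\Pr(\lef)$, the product of these first two factors telescopes to $\Pr(\leg, \lef, \c)/\Pr(\lef)$. The remaining Bayes-style ratio simplifies cleanly: since $\Pr(\lef \mid \leg) = \Pr(\lef, \leg)/\Pr(\leg)$ and $\Pr(\leg \mid \lef) = \Pr(\lef, \leg)/\Pr(\lef)$, the common factor $\Pr(\lef, \leg)$ cancels and that ratio equals $\Pr(\lef)/\Pr(\leg)$. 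Multiplying, the $\Pr(\lef)$ factors cancel and the right-hand side becomes $\Pr(\leg, \lef, \c)/\Pr(\leg) = \Pr(\lef, \c \mid \leg)$, matching the simplified left-hand side.

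I do not expect a genuine obstacle here: the entire argument is algebraic manipulation of conditional-probability definitions, and crucially it never factorizes any joint probability, so no independence is assumed -- which is precisely the point the theorem is meant to make relative to the analogous result in~\citet{xi24}. The only care needed is to ensure that the conditioning events have positive probability (in particular that $\Pr(\leg)$, $\Pr(\lef)$, and $\Pr(\lef, \c)$ are nonzero) so that every ratio is well defined; these are the same nondegeneracy assumptions already implicit in writing precision and recall as conditional probabilities in the first place.
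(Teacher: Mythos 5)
Your proposal is correct and follows essentially the same route as the paper's proof: both hinge on the observation that the left-hand side collapses to $\Pr(\lef, \c \mid \leg)$ and then unwind the right-hand side via the definition of conditional probability (the paper works forward from $\Pr(\lef,\c\mid\leg)$ by inserting factors of $1$, while you meet in the middle from both sides, but the content is identical). Your explicit note about the positivity of the conditioning events is a minor point of added care not spelled out in the paper.
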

\begin{proof}
\begin{small}
\begin{eqnarray*}
\Pr(\lef | \leg)- \Pr(\lef, \notc | \leg)\\
=\Pr(\lef,\c | \leg)\\
=\frac{\Pr(\lef,\c,\leg)}{\Pr(\leg)}\times\frac{\Pr(\lef,\c)}{\Pr(\lef,\c)}\\
=\frac{\Pr(\leg|\lef,\c)\times\Pr(\lef,\c)
\times\Pr(\lef)}{\Pr(\leg)\times\Pr(\lef)}\\
=\frac{\Pr(\leg|\lef,\c)\times\Pr(\c|\lef)
\times\Pr(\lef)}{\Pr(\leg)}
\end{eqnarray*}
\end{small}
\end{proof}

Here we see that the primary driver of recall reduction is $\Pr(\leg \; | \; \lef,\c)$, which is the probability of the model obtaining the correct answer under the metacognitive condition.  Likewise, the other term dependent on the conditions that impacts the reduction in recall is $\Pr(\c \; | \; \lef)$, which is the probability of a condition occurring with a prediction.  Note that both the decrease in recall and the increase in precision trend with this quantity. In the next section, on limitations, we shall understand how this quantity can be bounded.

\section{Limitations of Metacognitive Conditions}

We now explore some of the limits of our approach to metacognitive improvement.  Anecdotally, we noticed in prior metacognitive applications of EDCR that often detection seems easier than correction.  
In~\citet{xi24} and~\citet{lee24}, correction was conducted by changing the class label resulting from a condition-class pair that led to an error.  In the notation of this paper, such a precondition would be ``$\lefi,\c$'', meaning the condition of the model predicting class $\alpha$ while condition $c$ is also true.  This allows us to overcome the detection-induced deficit (a consequence of Theorem~\ref{thm:recall}), which was demonstrated empirically in~\citet{lee24} where such metacognitive correction could ensemble rules to directly improve recall over single-model baselines.  
However, it was less effective in improving recall in the experiments of~\citet{xi24}, where the use case consisted of five classes.  To understand why this occurs, consider a model that cannot distinguish between a set of samples, all classified under condition $c$.  A well-trained model assigns class $i$, the most probable class by training, but the probability of $i$ being correct is lower than the average precision for predictions of class $i$.  However, the next most probable class, $j$, is lower still, and picking this would lower overall loss.  As a result, without another condition or something else to distinguish these samples, the model and the metacognitive correction cannot re-assign those samples a new label that improves overall performance, while at the same time effectively identifying a case where the model had an error with a high probability.  Consider the following:

\begin{example}
\label{ex:3}
Consider the scenario from Example~\ref{ex:2}.  Suppose in cases where the error is observed (both classes $\toyota$ and $\us$ are predicted) that despite $\dodge$ being the ``best'' correction of class $\toyota$ we have the following:
\begin{eqnarray*}
\Pr(\dodge \in \fcar \; | \; \toyota \in \fcar, \us \in \fcar)\\
\leq \Pr(\dodge \in \fcar \; | \; \toyota \in \fcar)
\end{eqnarray*}
\end{example}

It turns out that the situation in Example~\ref{ex:3} leads to a reduction in precision for the class by which the label is re-assigned.  
We now formalize this argument.  Intuitively, if the precision for class $j$ conditioned on reclassifying items originally classified as $i$ under condition $c$ is lower, then the overall precision for classification of class $j$ will drop.

\begin{theorem}[Limits of Reclassification]
If we have that $\Pr(\legj \; | \; \lefi,\ci)\leq \Pr(\legj \;| \;\lefj)$, then:
\begin{small}
\[
\Pr(\legj \; | \; \lefj) \geq \Pr(\legj \; | \; \lefj \vee (\lefi,\ci)).
\]
\end{small}
\end{theorem}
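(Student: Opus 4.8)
The plan is to read the right-hand side as a weighted average of two conditional precisions and then let the hypothesis force that average below $\Pr(\legj\mid\lefj)$. To keep the algebra readable, I would abbreviate $A$ for the event $\lefj$, $B$ for the event $(\lefi,\ci)$, and $G$ for $\legj$; the hypothesis is then $\Pr(G\mid B)\le\Pr(G\mid A)$ and the goal is $\Pr(G\mid A)\ge\Pr(G\mid A\vee B)$.

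First I would expand the target via the definition of conditional probability, writing $\Pr(G\mid A\vee B)=\Pr(G,(A\vee B))/\Pr(A\vee B)$, and then split the union. The structural fact I would rely on is that $A$ and $B$ are disjoint: $A$ says the model emits $\beta$ while $B$ entails it emits $\alpha$, and because the reclassification in Example~\ref{ex:3} moves a prediction between two distinct labels at the same level of the hierarchy, $\lefi$ and $\lefj$ cannot hold at once. Under disjointness, $\Pr(G,(A\vee B))=\Pr(G,A)+\Pr(G,B)$ and $\Pr(A\vee B)=\Pr(A)+\Pr(B)$, so the target collapses to the mediant $\frac{\Pr(G,A)+\Pr(G,B)}{\Pr(A)+\Pr(B)}$.

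Next I would invoke the elementary mediant inequality: for $b,d>0$ one has $\frac{a+c}{b+d}\le\frac{a}{b}$ exactly when $bc\le ad$, i.e. when $\frac{c}{d}\le\frac{a}{b}$. Setting $a=\Pr(G,A)$, $b=\Pr(A)$, $c=\Pr(G,B)$, $d=\Pr(B)$, the premise $\frac{c}{d}\le\frac{a}{b}$ is precisely $\Pr(G\mid B)\le\Pr(G\mid A)$, which is the hypothesis. Hence the mediant is at most $\Pr(G\mid A)=\Pr(\legj\mid\lefj)$, establishing the claim. The cross-multiplication $bc\le ad$ is the only real computation, and it is routine.

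The step I expect to be the genuine obstacle is justifying the disjointness of $A$ and $B$, since the theorem is stated without an explicit single-label assumption. This matters because disjointness is not merely convenient but essential: if $A\cap B\neq\emptyset$, inclusion--exclusion leaves a $-\Pr(G,A,B)$ term in the numerator and $-\Pr(A,B)$ in the denominator, the clean mediant identity breaks, and one can in fact construct distributions satisfying $\Pr(G\mid B)\le\Pr(G\mid A)$ for which $\Pr(G\mid A\vee B)>\Pr(G\mid A)$. I would therefore make the mutual exclusivity of two same-level labels (equivalently, that a correction relabels $\alpha$ to a genuinely different class $\beta$) the operative assumption, and verify that the correction mechanism of~\citet{xi24} and~\citet{lee24} guarantees it, rather than attempting to weaken it.
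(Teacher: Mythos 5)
Your proof is correct and is essentially the paper's own argument: the paper runs the identical cross-multiplication by contradiction, and the step from its first displayed inequality to the second is exactly the mediant comparison you carry out directly. The one substantive thing you add is making the disjointness of $\lefj$ and $(\lefi,\ci)$ explicit; the paper's proof silently assumes the same additive split of $\Pr(\lefj \vee (\lefi,\ci))$ into $\Pr(\lefj)+\Pr(\lefi,\ci)$, and since the framework is multi-label (a model may output both $\alpha$ and $\beta$ on one sample), your observation that the claim can fail without this assumption is real rather than cosmetic --- stating mutual exclusivity of the two prediction events as an explicit hypothesis is a correct and worthwhile sharpening of the theorem.
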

\begin{proof}
By way of contradiction, assume:\\[2pt]
\begin{small}
$\Pr(\legj \; | \; \lefj) < \Pr(\legj \; | \; \lefj \vee (\lefi,\ci))$
\begin{eqnarray*}
\Pr(\legj \; | \; \lefj) < \Pr(\legj \; | \; \lefj \vee (\lefi,\ci))\\[4pt]
\frac{\Pr(\lefi,\ci)}{\Pr(\lefj)}<\frac{\Pr(\legj,\lefi,\ci)}{\Pr(\legj,\lefj)}\\[4pt]
\frac{\Pr(\legj,\lefj)}{\Pr(\lefj)}<\frac{\Pr(\legj,\lefi,\ci)}{\Pr(\lefi,\ci)}\\[4pt]
\Pr(\legj \; | \;\lefj)<\Pr(\legj \; | \; \lefi,\ci),
\end{eqnarray*}
\end{small}
which contradicts the statement of the theorem.
\end{proof}
We also note there are other potential limits on reclassification due to EDCR.  Specifically, if rules are learned in a manner that allows for inconsistencies (which would be possible if rules are learned among different models independently) then proper corrective action becomes less clear.  This is an active area of inquiry.

\smallskip
\noindent\textbf{Limitations to precision improvement/recall reduction.}
In another new result from our analysis, we show we can bound the quantity $\Pr(\c \; | \; \lefi)$, which as pointed out earlier can magnify or suppress the amount of change in precision, or amplify the reduction in recall based on the prevalence of the metacognitive condition.  Specifically, this is bounded by $\Pr(\c \; | \;\leng,\lefi)$ when $c$ is error-detecting (shown in the next corollary).  
The practical application of the result, when identifying a metacognitive condition, is that we can understand the power of such a condition by only analyzing a subset of a dataset where the model was correct, which may have implications for large-scale data analytics and imbalanced classification problems.  Note that we never assume $\c$ is independent of $\leng$ or $\lefi$ -- this result stems directly from the assumption that $c$ is error detecting.

\begin{corollary}
If $c$ is error detecting, then:
\[
\Pr(\c \;| \; \lefi) \leq \Pr(\c \;| \;\leng,\lefi) 
\]
\end{corollary}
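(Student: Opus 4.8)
The plan is to derive the inequality directly from the error-detecting hypothesis by a short Bayes-style manipulation, using no independence or distributional assumptions. The key observation is that the error-detecting condition, stated in terms of the ground-truth event $\legi$, can be rephrased in terms of its complement $\leng$, and once in that form it is algebraically equivalent to the claim.

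First I would instantiate Definition~\ref{def:det} (using the unspecified, training-data distribution), which gives $\Pr(\legi \mid \lefi, \c) \leq \Pr(\legi \mid \lefi)$. Since $\legi$ and $\leng$ partition the conditioning space, taking complements of both sides flips the inequality, yielding $\Pr(\leng \mid \lefi, \c) \geq \Pr(\leng \mid \lefi)$. This complement step is the conceptual crux: it reorients the statement around the error event $\leng$, which is precisely what the target inequality conditions on.

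Next I would expand both sides as ratios of joint probabilities, writing $\Pr(\leng \mid \lefi, \c) = \Pr(\leng, \lefi, \c)/\Pr(\lefi, \c)$ and analogously for the right-hand side, then cross-multiply (all denominators positive) to obtain $\Pr(\leng, \lefi, \c)\,\Pr(\lefi) \geq \Pr(\leng, \lefi)\,\Pr(\lefi, \c)$. Dividing both sides by $\Pr(\leng, \lefi)\,\Pr(\lefi)$ regroups the joint terms: the left becomes $\Pr(\c \mid \leng, \lefi)$ and the right becomes $\Pr(\c \mid \lefi)$, giving exactly the desired conclusion $\Pr(\c \mid \lefi) \leq \Pr(\c \mid \leng, \lefi)$.

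The main obstacle here is not analytical difficulty but careful bookkeeping: one must keep the three events $\lefi$, $\leng$, and $\c$ consistently ordered through the cross-multiplication so that the joint probabilities recombine into the correct conditionals, and one must note the implicit positivity of $\Pr(\lefi)$, $\Pr(\lefi, \c)$, and $\Pr(\leng, \lefi)$ so that every division is justified. Given that care, the result follows immediately and, as emphasized in the surrounding discussion, rests solely on the error-detecting assumption rather than on any independence between $\c$ and $\leng$ or $\lefi$.
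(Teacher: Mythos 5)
Your proposal is correct and follows essentially the same route as the paper: both start by complementing the error-detecting inequality to get $\Pr(\leng \mid \lefi, \c) \geq \Pr(\leng \mid \lefi)$ and then rearrange via Bayes-style algebra to isolate the ratio $\Pr(\c \mid \lefi)/\Pr(\c \mid \leng, \lefi)$. The only difference is presentational --- the paper invokes the identity $\Pr(\leng \mid \lefi) = \Pr(\c \mid \lefi)\Pr(\leng \mid \lefi,\c)/\Pr(\c \mid \leng,\lefi)$ directly, whereas you expand into joint probabilities and cross-multiply, which is the same computation with the positivity bookkeeping made explicit.
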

\begin{proof}
By definition of error detecting, we have:
\begin{eqnarray}
\Pr(\leng \; | \;\lefi\c)\geq \Pr(\leng \; | \; \lefi)\\[4pt]
= \frac{\Pr(\c \; | \;\lefi)\Pr(\leng \; | \; \lefi,\c)}{\Pr(\c \; | \; \leng,\lefi)}.
\end{eqnarray}
This in turn gives us:
\begin{eqnarray}
1\geq  \frac{\Pr(\c \; | \; \lefi)}{\Pr(\c \; | \; \leng,\lefi)}\\
\Pr(\c \; | \; \leng,\lefi) \geq \Pr(\c \; | \; \lefi),
\end{eqnarray}
thus completing the proof.
\end{proof}

\section{Conclusion: \\
Directions for Future Research}

In this paper, we reviewed a hybrid-AI technique for metacognition known as EDCR, and provided a probabilistic argument that supports previous findings.  This also suggests future research directions, so we end the paper with a discussion of some of these potential areas of inquiry.

\smallskip
\noindent\textbf{Role of Domain Knowledge and Inconsistency.}  
In Example~\ref{ex:2} we showed how metacognitive conditions can be used to identify inconsistencies, and the work of~\citet{kri24} demonstrates this empirically by both recovering latent constraints and using that to improve model loss via LTN.  When we view conditions from the standpoint of distribution invariance, logical inconsistency clearly meets these criteria.  This raises an interesting question: can consistency with domain knowledge be used for error correcting?  
Neurosymbolic approaches already use this concept to reduce training loss ~\cite{10637618}. However, understanding how consistency can be used for metacognitive correction remains an open question.

\smallskip
\noindent\textbf{Multi-model / Multi-modal reasoning.}  
The results of this paper suggest that EDCR can be most effective for improving model precision while sacrificing recall (Theorems~\ref{thm:1} and~\ref{thm:recall}).  The results of~\citet{lee24} essentially leverage this property to ensemble models together, as different models can be precise and their combined recall will lead to an increase if they are identifying different phenomena.  The ensembling of multi-modal models through EDCR seems to be a natural fit, as models of different modalities likely exhibit complementary capabilities.  However, work like~\citet{lee24} assumes one mode is the ``base model'' while the others are used to correct it.  The study of EDCR (or other metacognitive methods) to ensemble two or more models without a designed ``base model'' remains an open question.

\smallskip
\noindent\textbf{Online metacognition and data efficiency.}  Quantities such as $\Pr(\legi \; | \lefi, \c)$ and $\Pr(\c \; | \;  \lefi)$ (which are analogous to confidence and support) are typical byproducts of metacognitive rule learning under EDCR, and as seen in this paper are key quantities for validating invariance of conditions (Theorem~\ref{thm:edns}).  Likewise, an experiment in~\citet{xi24} shows how rules can be learned from a new distribution that are effective with a small portion of data.  Together, can these results suggest rapid adoption to a new distribution of data by identifying conditions on the fly (i.e., online learning of metacognitive conditions)?

\section*{Acknowledgments}
This research was supported by the Defense Advanced Research Projects Agency (DARPA) under Cooperative Agreement No. HR00112420370, the U.S. Army Combat Capabilities Development Command (DEVCOM) Army Research Office under Grant No. W911NF-24-1-0007, and the U.S. Army DEVCOM Army Research Lab under Support Agreement No. USMA 21050. The views expressed in this paper are those of the authors and do not reflect the official policy or position of the U.S. Military Academy, the U.S. Army, the U.S. Department of Defense, or the U.S. Government.


\small

\appendix
\section{Full Proof of Theorem~\ref{thm:1}}
\noindent\textbf{Claim 1:} \[
P_\alpha^{c}=\frac{P_\alpha-\Pr(\leg | \lefi, \c)\Pr(\c | \lefi)}{\Pr(\notc | \lefi)}
\]
\noindent\textbf{Proof of Claim 1:}
\begin{eqnarray*}
    P_\alpha^{c}=&\Pr(\alpha \in \gt | \alpha \in f_i,  c\notin \m)\\
    =&\frac{\Pr(\alpha \in \gt , \alpha \in f_i,  c\notin \m)}{\Pr(\alpha \in f_i,  c\notin \m)}\\
    =&\frac{\Pr(\alpha \in \gt , \alpha \in f_i,  c\notin \m)}{\Pr(\alpha \in f_i,  c\notin \m)}\frac{\Pr(\lefi,\notc)}{\Pr(\lefi)\Pr(\notc | \lefi)}\\
    =&\frac{\Pr(\alpha \in \gt, \notc | \lefi)}{\Pr(\notc | \lefi)}\\
    =&\frac{P_\alpha-\Pr(\alpha \in \gt, \c | \lefi)}{\Pr(\notc | \lefi)}\\
    =&\frac{P_\alpha}{\Pr(\notc | \lefi)}-\frac{\Pr(\alpha \in \gt, \c, \lefi)}{\Pr(\lefi)\Pr(\notc | \lefi)}\\
    =&\frac{P_\alpha}{\Pr(\notc | \lefi)}-\frac{\Pr(\alpha \in \gt, \c, \lefi)}{\Pr(\lefi)\Pr(\notc | \lefi)}\frac{\Pr(\c,\lefi)}{\Pr(\c,\lefi)}\\
    =&\frac{P_\alpha}{\Pr(\notc | \lefi)}-\frac{\Pr(\alpha \in \gt|  \lefi,\c)}{\Pr(\lefi)\Pr(\notc | \lefi)}\frac{\Pr(\c,\lefi)}{1}\\
    =&\frac{P_\alpha}{\Pr(\notc | \lefi)}-\frac{\Pr(\alpha \in \gt|  \lefi,\c)\Pr(\c|\lefi)}{\Pr(\notc | \lefi)}\\
    =&\frac{P_\alpha-\Pr(\alpha \in \gt|  \lefi,\c)\Pr(\c|\lefi)}{\Pr(\notc | \lefi)} 
\end{eqnarray*}
\noindent\textbf{Claim 2:} 
\begin{scriptsize}
\[
P_\alpha^{c}-P_\alpha=(\Pr(i \notin \gt | \lefi, \c)-R_i)\frac{\Pr(\c | \lefi)}{\Pr(\notc | \lefi)}
\]
\end{scriptsize}
\noindent\textbf{Proof of Claim 2:}
$P_\alpha^{c}-P_\alpha=$
\begin{eqnarray*}
&\frac{P_\alpha-\Pr(\leg | \lefi, \c)\Pr(\c | \lefi)}{\Pr(\notc | \lefi)}-P_\alpha\\
=&\frac{P_\alpha-\Pr(\leg | \lefi, \c)\Pr(\c | \lefi)-P_\alpha\Pr(\notc | \lefi)}{\Pr(\notc | \lefi)}\\
=&\frac{P_\alpha-\Pr(\leg | \lefi, \c)\Pr(\c | \lefi)-P_\alpha(1-\Pr(\c | \lefi))}{\Pr(\notc | \lefi)}\\
=&\frac{P_\alpha-\Pr(\leg | \lefi, \c)\Pr(\c | \lefi)+P_\alpha\Pr(\c | \lefi)-P_\alpha}{\Pr(\notc | \lefi)}\\
=&\frac{-\Pr(\leg | \lefi, \c)\Pr(\c | \lefi)+P_\alpha\Pr(\c | \lefi)}{\Pr(\notc | \lefi)}\\
=&(-\Pr(\leg | \lefi, \c)+P_\alpha)\frac{\Pr(\c | \lefi)}{\Pr(\notc | \lefi)}\\
=&(\Pr(i \notin \gt | \lefi, \c)+P_\alpha-1)\frac{\Pr(\c | \lefi)}{\Pr(\notc | \lefi)}\\
=&(\Pr(i \notin \gt | \lefi, \c)-R_i)\frac{\Pr(\c | \lefi)}{\Pr(\notc | \lefi)}\\
\end{eqnarray*}

\section{Second claim of Theorem~\ref{thm:edns}}
\noindent$(\leftarrow)$ By way of contradiction, assume that $P_\alpha^c \geq P_\alpha$ and that $c$ is not error detecting.
\begin{scriptsize}
\begin{eqnarray*}
\Pr(\legi|\lefi,\c)> \Pr(\legi|\lefi)\\
\frac{\Pr(\legi,\lefi,\c)}{\Pr(\lefi,\c)}>\frac{\Pr(\legi,\lefi)}{\Pr(\lefi)}\\
\frac{\Pr(\legi,\lefi)-\Pr(\legi,\lefi,\notc)}{\Pr(\lefi)-\Pr(\lefi,\notc)}>\frac{\Pr(\legi,\lefi)}{\Pr(\lefi)}\\
\frac{-\Pr(\legi,\lefi,\notc)}{\Pr(\legi,\lefi)}>\frac{-\Pr(\lefi,\notc)}{\Pr(\lefi)}\\
\frac{\Pr(\legi,\lefi,\notc)}{\Pr(\legi,\lefi)}<\frac{\Pr(\lefi,\notc)}{\Pr(\lefi)}\\
\frac{\Pr(\legi,\lefi,\notc)}{\Pr(\lefi,\notc)}<\frac{\Pr(\legi,\lefi)}{\Pr(\lefi)}\\
\Pr(\legi|\lefi,\notc)<\Pr(\legi|\lefi)
\end{eqnarray*}
\end{scriptsize}

However, this contradicts $P_\alpha^c \geq P_\alpha$, thus completing the proof.


\end{document}